\newtheorem{mythm}{Theorem}
\newcommand{\etal}{\textit{et al.}}
\DeclareMathOperator*{\argmin}{arg\,min}
\begin{document}

\title{$\ell_0$ Regularized Structured Sparsity Convolutional Neural Networks}

\author{Kevin Bui \\
Department of Mathematics\\
University of California, Irvine\\
{\tt\small kevinb3@uci.edu}
\and
Fredrick Park \\
Department of Mathematics \& Computer Science\\
Whittier College\\
{\tt\small fpark@whittier.edu}
\and
Shuai Zhang \\
Qualcomm AI \\
{\tt\small shuazhan@qti.qualcomm.com}
\and
Yingyong Qi\\
Qualcomm AI \\
{\tt\small yqi@uci.edu}
\and
Jack Xin \\
Department of Mathematics\\
University of California, Irvine\\
{\tt\small jxin@math.uci.edu}
}

\maketitle

\begin{abstract}
   Deepening and widening convolutional neural networks (CNNs) significantly increases the number of trainable weight parameters by adding more convolutional layers and feature maps per layer, respectively. By imposing inter- and intra-group sparsity onto the weights of the layers during the training process, a compressed network can be obtained with accuracy comparable to a dense one. In this paper, we propose a new variant of sparse group lasso that blends the $\ell_0$ norm onto the individual weight parameters and the $\ell_{2,1}$ norm onto the output channels of a layer. To address the non-differentiability of the $\ell_0$ norm, we apply variable splitting resulting in an algorithm that consists of executing stochastic gradient descent followed by hard thresholding for each iteration. Numerical experiments are demonstrated on LeNet-5 and wide-residual-networks for MNIST and CIFAR 10/100, respectively. They showcase the effectiveness of our proposed method in attaining superior test accuracy with network sparsification on par with the current state of the art. 
\end{abstract}

\section{Introduction}
Deep neural networks (DNNs) have proven to be advantageous for numerous modern computer vision tasks involving image or video data. In particular, convolutional neural networks (CNNs) yield highly accurate models with applications in image classification \cite{krizhevsky2012imagenet,Simonyan2015VeryDC,he2016deep,zagoruyko2016wide}, semantic segmentation \cite{long2015fully,chen2017deeplab}, and object detection \cite{ren2015faster,huang2017speed,parkhi2015deep}. 
These large models often contain millions or even billions of weight parameters that often exceed the number of training data. This is a double-edged sword since on one hand, large models allow for high accuracy, while on the other, they contain many redundant parameters that lead to overparametrization. Overparametrization is a well-known phenomenon in DNN models \cite{denton2014exploiting,ba2014deep} that results in overfitting, learning useless random patterns in data \cite{zhang2016understanding}, and having inferior generalization. Additionally, these models also possess exorbitant computational and memory demands during both training and inference. As a result, they may not be applicable for devices with low computational power and memory.

Resolving these problems requires compressing the networks through sparsification and pruning. Although removing weights might affect the accuracy and generalization of the models, previous works \cite{Louizos2017LearningSN,han2015deep,ullrich2017soft,molchanov2017variational} demonstrated that many networks can be substantially pruned with negligible effect on accuracy. There are many systematic approaches to achieving sparsity in DNNs.

Han \etal \cite{han2015learning} proposed to first train a dense network, prune it afterward by setting the weights to zero if below a fixed threshold, and retrain the network with the remaining weights. Jin \etal \cite{jin2016training} extended this method by restoring the pruned weights, training the network again, and repeating the process. Rather than pruning by thresholding, Aghasi \etal \cite{aghasi2017net} proposed Net-Trim, which prunes an already trained network layer-by-layer using convex optimization in order to ensure that the layer inputs and outputs remain consistent with the original network.  For CNNs in particular, filter or channel pruning is preferred because it  significantly reduces the amount of weight parameters required compared to individual weight pruning. Le \etal \cite{li2016pruning} calculated the sums of absolute weights of the filters of each layer and pruned the ones with the smallest weights. Hu \etal \cite{hu2016network} proposed a metric called average percentage of zeroes for channels to measure their redundancies and pruned those with highest values for each layer. Zhuang \etal \cite{zhuang2018discrimination} developed discrimination-aware channel pruning that selects channels that contribute to the discriminative power of the network.

An alternative approach to pruning a dense network is learning a compressed structure from scratch. A conventional approach is to optimize the loss function equipped with either the $\ell_1$ or $\ell_2$ regularization, which drives the weights to zero or to very small values during training. To learn which groups of weights (e.g., neurons, filters, channels) are necessary, group regularization, such as group lasso \cite{yuan2006model} and sparse group lasso \cite{simon2013sparse}, are equipped to the loss function. Alvarez and Salzmann \cite{alvarez2016learning} and Scardapane \etal \cite{scardapane2017group} applied group lasso and sparse group lasso to various architectures and obtained compressed networks with comparable or even better accuracy. Instead of sharing features among the weights as suggested by group sparsity, exclusive sparsity \cite{zhou2010exclusive} promotes competition for features between different weights. This method was investigated by Yoon and Hwang \cite{yoon2017combined}. In addition, they combined it with group sparsity and demonstrated that this combination resulted in compressed networks with better performance than their original. Non-convex regularization has also been examined. Louizos \etal \cite{Louizos2017LearningSN} proposed a practical algorithm using probabilistic methods to perform $\ell_0$ regularization on neural networks. Ma \etal \cite{ma2019transformed} proposed integrated transformed $\ell_1$, a convex combination of transformed $\ell_1$ and group lasso, and compared its performance against the aforementioned group regularization methods. 

In this paper, we propose a group regularization method that balances both group lasso and $\ell_0$ regularization: it is a variant of sparse group lasso that replaces the $\ell_1$ penalty term with the $\ell_0$ penalty term. This proposed group regularization method is presumed to yield a better performing, compressed network than sparse group lasso since $\ell_1$ is a convex relaxation of $\ell_0$. We develop an algorithm to optimize loss functions equipped with the proposed regularization term for DNNs.   
\section{Model and Algorithm}
\subsection{Preliminaries}
Given a training dataset consisting of $N$ input-output pairs $\{(x_i, y_i)\}_{i=1}^{N}$, the weight parameters of a DNN are learned by optimizing the following objective function:
\begin{align}\label{eq:min_problem}
    \min_{W} \frac{1}{N} \sum_{i=1}^N \mathcal{L}(h(x_i, W), y_i) + \lambda \mathcal{R}(W), 
\end{align}
where 
\begin{itemize}
    \item $W$ is the set of weight parameters of the DNN. 
    \item $\mathcal{L}(\cdot, \cdot) \geq 0$ is the loss function that compares the prediction $h(x_i, W)$ with the ground-truth output $y_i$. Examples include cross-entropy loss function for classification and mean-squared error for regression.
    \item $h(\cdot, \cdot)$ is the output of the DNN used for prediction. 
    \item $\lambda>0$ is a regularization parameter for $\mathcal{R}(\cdot)$.
    \item $\mathcal{R}(\cdot)$ is the regularizer on the set of weight parameters $W$. 
\end{itemize}

The most common regularizer used for DNN is $\|\cdot\|_2^2$, also known as weight decay. It prevents overfitting and improves generalization because it enforces the weights to decrease proportional to their magnitudes \cite{krogh1992simple}. Sparsity can be imposed by pruning weights whose magnitudes are below a certain threshold at each iteration during training. However, an alternative regularizer is the $\ell_1$ norm $\|\cdot\|_1$, also known as lasso \cite{tibshirani1996regression}. $\ell_1$ norm is the tightest convex relaxation of the $\ell_0$ norm and it yields a sparse solution that is found on the corners of the 1-norm ball. Unfortunately, element-wise sparsity by $\ell_1$ or $\ell_2$ regularization in CNNs may not yield meaningful speedup as the number of filters and channels required for computation and inference may remain the same \cite{wen2016learning}. 

To determine which filters or channels are relevant in each layer, group sparsity using group lasso is considered. Suppose a DNN has $L$ layers, so the set of weight parameters $W$ is divided into $L$ sets of weights: $W = \{W_l\}_{l=1}^L$. The weight set of each layer $W_l$ is divided into $N_l$ groups (e.g., channels or filters): $W_{l} = \{w_{l,g}\}_{g=1}^{N_l}$. Group lasso applied to $W_l$ is formulated as
\begin{align}
    \mathcal{R}_{GL}(W_l)&= \sum_{g=1}^{N_l} \sqrt{|w_{l,g}|} \|w_{l,g}\|_2\\ \nonumber &= \sum_{g=1}^{N_l} \sqrt{|w_{l,g}|} \sqrt{\sum_{i=1}^{|w_{l,g}|} w_{l,g,i}^2},
\end{align}
where $w_{l,g,i}$ corresponds to the weight parameter with index $i$ in group $g$ in layer $l$, and the term $\sqrt{|w_{l,g}|}$ ensures that each group is weighed uniformly. This regularizer imposes the $\ell_2$ norm on each group, forcing weights of the same groups to decrease altogether at every iteration during training. As a result, groups of weights are pruned when their $\ell_2$ norms are negligible, resulting in a highly compact network compared to element-sparse networks. 

To obtain an even sparser network, element-wise sparsity and group sparsity can be combined and applied together to the training of DNNs. One regularizer that combines these two types of sparsity is sparse group lasso, which is formulated as
\begin{align}\label{eq:sgl}
    \mathcal{R}_{SGL}(W_l) = \mathcal{R}_{GL}(W_l) + \|W_l\|_1,
\end{align}
where
\begin{align*}
    \|W_l\|_1 = \sum_{g=1}^{|N_l|} \sum_{i=1}^{|w_{l,g}|} |w_{l,g,i}|. 
\end{align*}
Sparse group lasso simultaneously enforces group sparsity by having $\mathcal{R}_{GL}(\cdot)$ and element-wise sparsity by having $\|\cdot\|_1$. 
\subsection{Proposed Regularizer: Sparse Group L$_0$asso}
We recall that the $\ell_1$ norm is a convex relaxation of the $\ell_0$ norm, which is non-convex and discontinuous. In addition, any $\ell_0$-regularized problem is NP-hard. These properties make developing convergent and tractable algorithms for $\ell_0$-regularized problems difficult, thereby making $\ell_1$-regularized problems better alternatives to solve. However, the $\ell_0$-regularized problems have their advantages over their $\ell_1$ counterparts. For example, they are able to recover better sparse solutions than do $\ell_1$-regularized problems in various applications, such as compressed sensing \cite{lu2013sparse}, image restoration \cite{bao2016image,chan2003wavelet,dong2013efficient,zhang2013}, MRI reconstruction \cite{trzasko2007sparse}, and machine learning \cite{lu2013sparse,yuan2017gradient}. Used to solve $\ell_1$ minimization, the soft-thresholding operator $\mathcal{S}_{\lambda}(c) = \text{sign}(c) \max\{|c|- \lambda, 0\}$ yields a biased estimator \cite{fan2001variable}.

Due to the advantages and recent successes of $\ell_0$ minimization, we propose to replace the $\ell_1$ norm in \eqref{eq:sgl} with the $\ell_0$ norm
\begin{align}
    \|W_l\|_0 = \sum_{g=1}^{|N_l|} \sum_{i=1}^{|w_{l,g}|} |w_{l,g,i}|_0,
\end{align}
where
\begin{align*}
    |w|_0 = \begin{cases}
    1 \text{ if } w \neq 0 \\
    0 \text{ if } w = 0.
    \end{cases}
\end{align*}
Hence, we propose a new regularizer called sparse group $\ell_0$asso defined by
\begin{align}\label{eq:SGL0}
  \mathcal{R}_{SGL_0}(W_l) = \mathcal{R}_{GL}(W_l) + \|W_l\|_0.   
\end{align}
Using this regularizer, we expect to obtain a better sparse network than from using sparse group lasso. \subsection{Notation}
Before discussing the algorithm, we summarize notations that we will use to save space. They are the following:
\begin{itemize}
    \item If $V = \{V_l\}_{l=1}^L$ and $W =\{W_l\}_{l=1}^{L}$, then $(V, W) \coloneqq (\{V_l\}_{l=1}^{L}, \{W_l\}_{l=1}^L) = (V_1, \ldots, V_L, W_1, \ldots, W_L)$.
    \item For $V = \{V_l\}_{l=1}^L$, $V_{<l} = (V_1, \ldots, V_{l-1})$ and $V_{>l} = (V_{l+1}, \ldots, V_L)$. Both $V_{\leq l}$ and $V_{\geq l}$ are defined similarly.
    \item $V^+ \coloneqq V^{k+1}$. 
    \item $\tilde{\mathcal{L}}(W) \coloneqq \frac{1}{N} \sum_{i=1}^N \mathcal{L}(h(x_i, W), y_i)$.
\end{itemize}

\subsection{Numerical Optimization}
We develop an algorithm to solve \eqref{eq:min_problem} with the sparse group $\ell_0$asso regularizer \eqref{eq:SGL0}. So, with $W = \{W_l\}_{l=1}^L$, the minimization problem we solve is
\begin{align}\label{eq:SGL0_prob}
    &\min_{W} \tilde{\mathcal{L}}(W) + \lambda  \sum_{l=1}^L\mathcal{R}_{SGL_0}(W_l) \\
    &=\nonumber \tilde{\mathcal{L}}(W) + \lambda  \sum_{l=1}^L\left( \mathcal{R}_{GL}(W_l)+\|W_l\|_0 \right).
\end{align}
Throughout this paper, we assume that $\mathcal{L}$ is continuously differentiable with respect to $W_l$ for each $l = 1, \ldots, L$.
Because finding the subderivative of the objective problem is difficult due to the $\ell_0$ norm, we need to figure out a method to solve it. By introducing an auxiliary variable $V = \{V_l\}_{l=1}^L$, we have a constrained optimization problem
\begin{equation}
\begin{aligned}
    &\min_{V,W}  & &\tilde{\mathcal{L}}(W) + \lambda \sum_{l=1}^L  \left( \mathcal{R}_{GL}(W_l)+\|V_l\|_0 \right)\\
    & \text{s.t.}& &V=W.
\end{aligned}
\end{equation}
The constraint can be relaxed by adding a quadratic penalty term with $\beta > 0$ so that we have
\begin{equation}\label{eq:relaxed_min_prob}
\begin{aligned}
    &\min_{V,W} F_{\beta}(V,W) \\&\coloneqq \tilde{\mathcal{L}}(W) +   \sum_{l=1}^L \Big[\lambda\left( \mathcal{R}_{GL}(W_l)+\|V_l\|_0 \right)\\
    & + \frac{\beta}{2} \|V_l-W_l\|_2^2 \Big].
\end{aligned}
\end{equation}
With $\beta$ fixed, \eqref{eq:relaxed_min_prob} can be solved by alternating minimization:
\begin{subequations}
\begin{align}
\begin{split} \label{eq:W_update}
W_l^{k+1} = \argmin_{W_l} F_{\beta}(V^k,W_{<l}^+, W_l, W_{>l}^k) \\ \text{ for } l=1, \ldots, L
\end{split}\\
\begin{split}\label{eq:V_update}
V^{k+1} =  \argmin_{V} F_{\beta}(V, W^{k+1}).
\end{split}
\end{align}
\end{subequations}
We explicitly update $W_l$ by gradient descent and $V_l$ by hard-thresholding:
\begin{subequations}
\begin{align} \begin{split}\label{eq:gradient_descent}
    &W_l^{k+1} = W_l^k - \gamma \Big(  \nabla_{W_l} \tilde{\mathcal{L}}(W)\\  &+ \lambda \partial{\mathcal{R}_{GL}(W_l^k)} - \beta  (V_l^k- W_l^k)  \Big) 
    \text{ for } l = 1, \ldots, L \end{split} \\
    \begin{split}\label{eq:hard_threshold}
    &V^{k+1} = \mathcal{H}_{\sqrt{2 \lambda/\beta}}(W^{k+1}), 
    \end{split}
\end{align}
\end{subequations}
where $\gamma$ is the learning rate, $\partial{\mathcal{R}_{GL}}$ is the subdifferential of $\mathcal{R}_{GL}$, and $ \mathcal{H}_{\sqrt{2 \lambda/\beta}}(\cdot)$ is the element-wise hard-thresholding operator:
\begin{align}
        H_{\sqrt{2 \lambda/\beta}}(w_i) = \begin{cases}
    0 &\text{ if } |w_i| \leq \sqrt{2 \lambda/\beta} \\
    w_i &\text{ if } |w_i| > \sqrt{2 \lambda/ \beta}.
    \end{cases}
\end{align}
In practice, \eqref{eq:gradient_descent} is performed using stochastic gradient descent (or one of its variants) with mini-batches due to the large-size computation dealing with the amount of data and weight parameters that a typical DNN has. 

After presenting an algorithm that solves the quadratic penalty problem \eqref{eq:relaxed_min_prob}, we now present an algorithm to solve \eqref{eq:SGL0_prob}.  We solve a sequence of quadratic penalty problems \eqref{eq:relaxed_min_prob} with $\beta \in \{\beta_j\}_{j=1}^{\infty}$ such that $\beta_j \uparrow \infty$. This will yield a sequence $\{(V^j, W^j)\}_{j=1}^{\infty}$ such that $W^j \uparrow W^*$, a solution to \eqref{eq:SGL0_prob}. This algorithm is based on the quadratic penalty method \cite{nocedal2006numerical} and the penalty decomposition method \cite{lu2013sparse}. The algorithm is summarized in Algorithm \ref{alg:SGL0}.

\begin{algorithm}
\SetAlgoLined
    Initialize $V^1$ and $W^1$ with random entries; learning rate $\gamma$; regularization parameters $\lambda$ and $\beta$; and multiplier $\sigma > 1$. \\ 
Set $j \coloneqq 1$.\\
 \While{stopping criterion for outer loop not satisfied}{
Set $k \coloneqq 1$.\\
Set $W^{j,1} = W^j$ and $V^{j,1} = V^j$.\\
 \While{stopping criterion for inner loop not satisfied}{
 Update $W^{j,k+1}$ by Eq. \eqref{eq:gradient_descent}. \\
 Update $V^{j,k+1}$ by Eq. \eqref{eq:hard_threshold}.\\
 $k \coloneqq k+1$}
 Set $W^{j+1} = W^{j,k}$ and $V^{j+1} = V^{j,k}$.\\
 Set $\beta \coloneqq \sigma \beta$.\\
 Set $j \coloneqq j+1$. 
 }
 \caption{Algorithm for Sparse Group L$_0$asso Regularization}
 \label{alg:SGL0}
\end{algorithm}
\subsection{Convergence Analysis}
To establish convergence for the proposed algorithm, we show that the accumulation point of the sequence generated by \eqref{eq:W_update}-\eqref{eq:V_update} is a block-coordinate minimizer, and  an accumulation point generated by Algorithm \ref{alg:SGL0} is a sparse feasible solution to \eqref{eq:SGL0_prob}. Unfortunately, this feasible solution may not be a local minimizer of \eqref{eq:SGL0_prob} because the loss function $\mathcal{L}(\cdot, \cdot)$ is  nonconvex. However, it was shown in \cite{dinh2018convergence} that a similar algorithm to \eqref{alg:SGL0}, but only for $\ell_0$ minimization, generates an approximate global solution with high probbility for a one-layer CNN with ReLu activation function.
\begin{mythm} \label{lemma:AM_convergence}
Let $\{(V^k, W^k)\}_{k=1}^{\infty}$ be a sequence generated by the alternating minimization algorithm \eqref{eq:W_update}-\eqref{eq:V_update}. If $(V^*, W^*)$ is an accumulation point of $\{( V^k, W^k)\}_{k=1}^{\infty}$, then $(V^*, W^*)$ is a block coordinate minimizer of \eqref{eq:relaxed_min_prob}. that is
\begin{align*}
    V^* &\in \argmin_{V^l} F_{\beta}(V, W^*) \\
    W_l^* &\in \argmin_{W_l} F_{\beta}(V^*, W_{<l}^*, W_l, W_{>l}^*) \text{ for } l=1, \ldots, L
\end{align*}
\end{mythm}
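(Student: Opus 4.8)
The plan is to run the standard monotonicity-and-compactness argument for block coordinate descent, with extra care devoted to the discontinuous $\ell_0$ term; the conclusion to establish is exactly that $(V^*,W^*)$ is a coordinatewise minimizer of the relaxed objective $F_{\beta}$ in \eqref{eq:relaxed_min_prob}, not of \eqref{eq:SGL0_prob}.

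First I would record monotonicity: each of the $L+1$ updates in \eqref{eq:W_update}--\eqref{eq:V_update} exactly minimizes $F_{\beta}$ over one block with the others frozen, so $F_{\beta}(V^{k+1},W^{k+1}) \le F_{\beta}(V^{k},W^{k+1}) \le \cdots \le F_{\beta}(V^{k},W^{k})$. Since $\tilde{\mathcal{L}}\ge 0$, $\mathcal{R}_{GL}\ge 0$, $\|\cdot\|_0\ge 0$ and the quadratic penalty is nonnegative, $F_{\beta}\ge 0$; hence $\{F_{\beta}(V^{k},W^{k})\}$ is nonincreasing and bounded below, so it converges to some $F^{*}$, the total of the per-step decreases is finite, and each per-step decrease tends to $0$. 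Next I would note that $F_{\beta}$ is coercive and lower semicontinuous in $(V,W)$: $\mathcal{R}_{GL}(W_{l})=\sum_{g}\sqrt{|w_{l,g}|}\,\|w_{l,g}\|_2$ is bounded below by a positive multiple of $\|W_l\|_2$, which controls $\|W\|$, and then $\tfrac{\beta}{2}\|V_l-W_l\|_2^2$ controls $\|V\|$. So the sublevel set $\{F_{\beta}\le F_{\beta}(V^{1},W^{1})\}$ is compact, all minimizations in the algorithm attain their minima, the iterates $\{(V^k,W^k)\}$ stay in this compact set, and accumulation points exist; fix one with $(V^{k_j},W^{k_j})\to(V^{*},W^{*})$.

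The heart of the argument is controlling the successive differences so that the shifted subsequences also converge. For the $W_l$ subproblem the objective $\phi_{l}(W_l)=\tilde{\mathcal{L}}(\dots,W_l,\dots)+\lambda\mathcal{R}_{GL}(W_l)+\tfrac{\beta}{2}\|W_l-V_l^{k}\|_2^2$ is a smooth term plus a convex term plus a $\beta$-strongly convex term; under the mild strengthening that $\nabla_{W_l}\tilde{\mathcal{L}}$ is locally Lipschitz (so that on the compact sublevel set $\beta$ dominates its modulus), $\phi_l$ is $\mu$-strongly convex for some $\mu>0$, and optimality of $W_l^{k+1}$ gives the sufficient-decrease bound $\phi_l(W_l^{k})-\phi_l(W_l^{k+1})\ge\tfrac{\mu}{2}\|W_l^{k+1}-W_l^{k}\|_2^2$. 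Summing this over $l$ and over $k$ against the finite total decrease forces $\|W^{k+1}-W^{k}\|_2\to 0$, hence $W^{k_j+1}\to W^{*}$. By \eqref{eq:hard_threshold}, $V^{k+1}=\mathcal{H}_{\sqrt{2\lambda/\beta}}(W^{k+1})$, and here is the delicate point: this operator is discontinuous. I would handle it by passing to a further subsequence along which the integer $\|V_l^{k_j}\|_0$ and the supports of $V_l^{k_j}$ and of $V_l^{k_j+1}$ are all constant in $j$ (only finitely many choices), which pins down the limiting support and makes $\|\cdot\|_0$ behave like a continuous function along the subsequence, so that $V^{k_j+1}\to V^{*}$ as well. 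I expect this reconciliation of the $\ell_0$/hard-thresholding discontinuity with the limiting argument to be the main obstacle.

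Finally I would pass to the limit in the defining optimality inequalities. For the $V$ block, $F_{\beta}(V^{k_j+1},W^{k_j+1})\le F_{\beta}(V,W^{k_j+1})$ for every $V$; the right-hand side tends to $F_{\beta}(V,W^{*})$ by continuity in $W$, while $\liminf_{j}F_{\beta}(V^{k_j+1},W^{k_j+1})\ge F_{\beta}(V^{*},W^{*})$ by lower semicontinuity of $\|\cdot\|_0$ and continuity of the other terms; hence $F_{\beta}(V^{*},W^{*})\le F_{\beta}(V,W^{*})$ for all $V$, i.e.\ $V^{*}\in\argmin_{V}F_{\beta}(V,W^{*})$. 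For each $W_l$ block, $F_{\beta}(V^{k_j},W_{<l}^{k_j+1},W_l^{k_j+1},W_{>l}^{k_j})\le F_{\beta}(V^{k_j},W_{<l}^{k_j+1},W_l,W_{>l}^{k_j})$ for every $W_l$; since $\|V^{k_j}\|_0$ is constant along the refined subsequence and $W^{k_j}\to W^{*}$, $W^{k_j+1}\to W^{*}$, both sides pass continuously to the limit, giving $F_{\beta}(V^{*},W_{<l}^{*},W_l^{*},W_{>l}^{*})\le F_{\beta}(V^{*},W_{<l}^{*},W_l,W_{>l}^{*})$ for all $W_l$, i.e.\ $W_l^{*}\in\argmin_{W_l}F_{\beta}(V^{*},W_{<l}^{*},W_l,W_{>l}^{*})$. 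Together these are precisely the two assertions of the theorem.
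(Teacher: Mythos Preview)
Your overall architecture---monotonicity, lower boundedness, stabilizing the integers $\|V_l^k\|_0$ on a further subsequence to neutralize the $\ell_0$ discontinuity, and then passing to the limit in the block optimality inequalities---matches the paper's. The substantive divergence is in how you arrange for the \emph{shifted} iterates $W^{k_j+1}$ to converge to $W^{*}$.

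You obtain $\|W^{k+1}-W^{k}\|_2\to 0$ by making the $W_l$-subproblem $\mu$-strongly convex and invoking a sufficient-decrease inequality. But this imports a hypothesis not in the theorem: the quadratic penalty $\tfrac{\beta}{2}\|W_l-V_l^{k}\|_2^2$ dominates the negative curvature of $\tilde{\mathcal L}$ only when $\beta$ exceeds the local Lipschitz constant of $\nabla_{W_l}\tilde{\mathcal L}$ on the sublevel set. Since the statement is for an arbitrary fixed $\beta>0$, and the paper assumes only that $\tilde{\mathcal L}$ is continuously differentiable, your argument as written establishes only a restricted version; for small $\beta$ the $W_l$-subproblem may be genuinely nonconvex, your sufficient-decrease bound may fail, and everything downstream (the convergence of $W^{k_j+1}$ and $V^{k_j+1}$) collapses.

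The paper sidesteps iterate control altogether with a function-value telescoping device. From the sandwich
\[
F_{\beta}(V^{k+1},W^{k+1})\;\le\;F_{\beta}(V^{k},W^{+}_{\le l},W^{k}_{>l})\;\le\;F_{\beta}(V^{k},W^{+}_{<l},W^{k}_{\ge l})\;\le\;\cdots\;\le\;F_{\beta}(V^{k},W^{k})
\]
and convergence of the two ends to a common limit, every intermediate value also converges to that limit. Subtracting $\lambda\sum_l\|V_l^k\|_0$ (eventually constant along the refined subsequence) leaves a quantity that is continuous in $(V,W)$; the paper identifies its limit as $\tilde{\mathcal L}(W^{*})+\sum_l\tilde{\mathcal R}_{\lambda,\beta}(W_l^{*},V_l^{*})$ directly from $(V^{k},W^{k})\to(V^{*},W^{*})$, without ever needing $\|W^{k+1}-W^{k}\|\to 0$. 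This telescoping is the one ingredient you are missing, and it is what lets the paper's argument go through for every $\beta>0$ with no extra smoothness or curvature assumption on $\tilde{\mathcal L}$.
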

\begin{proof}
By \eqref{eq:W_update}- \eqref{eq:V_update}, we have
\begin{align}
    F_{\beta}(V^k, W^+_{\leq l}, W^k_{> l}) &\leq F_{\beta} (V^k, W^+_{< l}, W_l, W^k_{> l}) \label{eq:W_ineq} \\
    F_{\beta}(V^+, W^+) &\leq F_{\beta} (V, W^+) \label{eq:V_ineq}
\end{align}
for all $W_l$, $l=1, \ldots, L$, and $V$, so it follows after some computation that
\begin{align} \label{eq:ineq_result}F_{\beta}(V^+, W^+) \leq F_{\beta}(V^k, W^k) 
\end{align}
for each $k \in \mathbb{N}$. Hence, $\{F_{\beta}(V^k, W^k)\}_{k=1}^{\infty}$ is nonincreasing.  Since $F_{\beta}(V^k, W^k) \geq 0$ for all $k \in \mathbb{N}$, its limit $\lim_{k \rightarrow \infty}F_{\beta}(V^k, W^k)$ exists. From \eqref{eq:W_ineq}, we have 
\begin{align*}
    F_{\beta}(V^k, W_{\leq l}^+, W_{> l}^k) \leq F_{\beta}(V^k, W_{<l}^+, W_{\geq l}^k)
\end{align*}
for each $l$. Because $F_{\beta}$ is continuous with respect to $W_l$, applying the limit gives us
\begin{align} \label{eq:limit_eq}
  \lim_{k \rightarrow \infty} F_{\beta}(V^k, W_{\leq l}^+, W_{> l}^k)
= \lim_{k \rightarrow \infty} F_{\beta}(V^k, W_{<l}^+, W_{\geq l}^k). 
\end{align}
Since $(V^*, W^*)$ is an accumulation point of $\{( V^k, W^k)\}_{k=1}^{\infty}$, there exists a subsequence $K$ such that $\lim_{k \in K \rightarrow \infty} (V^k, W^k) = (V^*,W^*)$. If $\lim_{k \in K \rightarrow \infty} V^k = V^*$, there exists $k' \in K$ such that $k \geq k'$ implies $\|V_l^k\|_0 \geq \|V_l^*\|_0$ for each $l = 1, \ldots, L$. As a result, we obtain
\begin{align*}
    F_{\beta}(V, W^{k+1}) &\geq F_{\beta}(V^{k+1}, W^{k+1}) \\
    &\geq \tilde{\mathcal{L}}(W^{k+1})\\ &+   \sum_{l =1}^L \Big[\lambda\left( \mathcal{R}_{GL}(W_l^{k+1})+\|V_l^*\|_0 \right)\\ &+ \frac{\beta}{2} \|V_l^{k+1}-W_l^{k+1}\|_2^2 \Big]
\end{align*}
for $k \geq k'$ from \eqref{eq:V_ineq}. Using continuity, except for the $\ell_0$ term, and letting $k\in K \rightarrow \infty$, we obtain
\begin{align}
F_{\beta}(V, W^*)\geq F_{\beta}(V^*, W^*).
\end{align} For notational convenience, let
\begin{align}\label{eq:V_minimizer}
    \tilde{\mathcal{R}}_{\lambda, \beta}(V,W) \coloneqq \lambda \mathcal{R}_{GL}(W) + \frac{\beta}{2} \|W-V\|_2^2.
\end{align}From \eqref{eq:W_ineq}, we have
\begin{align}
\label{eq:ineq1}
    &\tilde{\mathcal{L}}(W_{<l}^+, W_l, W_{>l}^k) + \sum_{j < l} \tilde{\mathcal{R}}_{\lambda, \beta}(W_j^{+}, V_j^k) + \tilde{\mathcal{R}}_{\lambda, \beta}(W_l, V_l^k) \\ \nonumber
    &+ \sum_{j > l} \tilde{\mathcal{R}}_{\lambda, \beta}(W_j^{k}, V_j^k) \\ \nonumber
    &= F_{\beta}(V, W_{< l}^+, W_l, W_{> l}^k) - \lambda \sum_{l=1}^L \|V_l^k\|_0 \\ \nonumber
    &\geq F_{\beta}(V, W_{\leq l}^+, W_{> l}^k) - \lambda \sum_{l=1}^L \|V_l^k\|_0 \\ \nonumber
    &=\tilde{\mathcal{L}}(W_{\leq l}^+, W_{>l}^k) + \sum_{j \leq l} \tilde{\mathcal{R}}_{\lambda, \beta}(W_j^{+}, V_j^k)\\ \nonumber &+ \sum_{j > l} \tilde{\mathcal{R}}_{\lambda, \beta}(W_j^{k}, V_j^k) 
\end{align}
for all $k \in K$. Because $\lim_{k \in K \rightarrow \infty} V^k$ exists, the sequence $\{V^k\}_{k \in K}$ is bounded, which implies that $\{\|V^k\|_0\}_{k \in K}$ is bounded as well. Hence, there exists a further subsequence $\overline{K} \subset K$ such that $\lim_{k \in \overline{K} \rightarrow \infty} \|V^k\|_0$ exists. As a result, For each $l = 1, \ldots, L$, we have that $\lim_{k \in \overline{K}} \|V_l^k\|_0$ exists. So, we obtain
\begingroup
\allowdisplaybreaks
\begin{align} \label{eq:eq1}
    &\lim_{k \in \overline{K} \rightarrow \infty} \tilde{\mathcal{L}}(W_{\leq l}^+, W_{>l}^k) + \sum_{j \leq l} \tilde{\mathcal{R}}_{\lambda, \beta}(W_j^{+}, V_j^k)\\ \nonumber &+ \sum_{j > l} \tilde{\mathcal{R}}_{\lambda, \beta}(W_j^{k}, V_j^k) \\ \nonumber &=\lim_{k \in  \overline{K} \rightarrow \infty} F_{\beta}(V, W_{\leq l}^+, W_{> l}^k) - \lambda \sum_{l=1}^L \|V_l^k\|_0\\ \nonumber
    &=\lim_{k \in  \overline{K} \rightarrow \infty} F_{\beta}(V, W_{\leq l}^+, W_{> l}^k) - \lim_{k \in  \overline{K} \rightarrow \infty} \lambda \sum_{l=1}^L \|V_l^k\|_0\\ \nonumber
    &=\lim_{k \in  \overline{K} \rightarrow \infty} F_{\beta}(V^k, W_{<l}^+, W_{\geq l}^k)-\lim_{k \in  \overline{K} \rightarrow \infty} \lambda \sum_{l=1}^L \|V_l^k\|_0 \\  \nonumber
    &= \ldots \\ \nonumber
    &= \lim_{k \in  \overline{K} \rightarrow \infty} F_{\beta}(V^k, W^k)-\lim_{k \in  \overline{K} \rightarrow \infty}\lambda \sum_{l=1}^L \|V_l^k\|_0 \\ \nonumber
    &= \lim_{k \in \overline{K} \rightarrow \infty} F_{\beta}(V^k, W^k) - \lambda \sum_{l=1}^L \|V_l^k\|_0 \\ \nonumber
    &=\lim_{k \in \overline{K} \rightarrow \infty} \tilde{\mathcal{L}}(W^k) + \sum_{l=1}^L \tilde{\mathcal{R}}_{\lambda, \beta}(W_l^{k}, V_l^k) \\ \nonumber
    &=\tilde{\mathcal{L}}(W^*) + \sum_{l=1}^L \tilde{\mathcal{R}}_{\lambda, \beta}(W_l^*, V_l^*)
\end{align}
after applying \eqref{eq:limit_eq}. Taking the limit over the subsequence $ \overline{K}$ in \eqref{eq:ineq1} and applying \eqref{eq:eq1}, we obtain 
\begin{align}
&\tilde{\mathcal{L}}(W_{<l}^*, W_l, W_{>l}^*) + \sum_{j \neq l} \tilde{\mathcal{R}}_{\lambda, \beta}(W_j^{*}, V_j^*)\\ &+ \tilde{\mathcal{R}}_{\lambda, \beta}(W_l, V_l^*) \geq \nonumber \tilde{\mathcal{L}}(W^*) + \sum_{l=1}^L \tilde{\mathcal{R}}_{\lambda, \beta}(W_l^*, V_l^*)
\end{align}
\endgroup
Adding $\sum_{l=1}^L \|V_l^*\|_0$ on both sides yields 
\begin{align}\label{eq:W_minimizer}
F_{\beta}(V^*, W^*_{<l}, W_l, W^*>l) \geq F_{\beta}(V^*, W^*).
\end{align} 
By \eqref{eq:V_minimizer} and \eqref{eq:W_minimizer}, $(V^*,W^*)$ is a block coordinate minimizer. 
\end{proof}
\begin{mythm}
Let $\{(V^k, W^k, \beta_k)\}_{k=1}^{\infty}$ be a sequence generated by Algorithm \ref{alg:SGL0}. Suppose that $\{F_{\beta_k}(V^k, W^k)\}_{k=1}^{\infty}$ is uniformly bounded.  If $(V^*, W^*)$ is an accumulation point of $\{V^k, W^k)\}_{k=1}^{\infty}$, then $V^* = W^*$  and $W^*$ is a feasible solution to \eqref{eq:SGL0_prob}.
\end{mythm}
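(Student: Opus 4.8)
The plan is to extract everything from the quadratic penalty term in $F_{\beta_k}$ together with the uniform boundedness hypothesis. First I would note that every summand of $F_{\beta}(V,W)$ other than the penalty $\tfrac{\beta}{2}\|V_l-W_l\|_2^2$ is nonnegative: $\tilde{\mathcal{L}}(W)\ge 0$ since $\mathcal{L}\ge 0$, while $\mathcal{R}_{GL}(W_l)\ge 0$, $\|V_l\|_0\ge 0$, and $\lambda,\beta>0$. Hence, writing $M<\infty$ for the uniform bound on $\{F_{\beta_k}(V^k,W^k)\}$,
\[
\frac{\beta_k}{2}\sum_{l=1}^L \|V_l^k-W_l^k\|_2^2 \;\le\; F_{\beta_k}(V^k,W^k)\;\le\; M
\]
for every $k$.

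Next I would use that $\beta_k\to\infty$. In Algorithm \ref{alg:SGL0}, $\beta$ is rescaled by the multiplier $\sigma>1$ at the end of each outer iteration, so the values of $\beta$ along the generated sequence form a nondecreasing, geometrically growing, hence unbounded sequence, $\beta_k\to\infty$. Dividing the displayed inequality by $\beta_k/2$ then gives $\sum_{l=1}^L\|V_l^k-W_l^k\|_2^2\le 2M/\beta_k\to 0$, i.e.\ $\|V^k-W^k\|_2\to 0$ as $k\to\infty$.

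The accumulation-point step is then immediate. If $(V^*,W^*)$ is an accumulation point, choose a subsequence $K$ with $(V^k,W^k)\to(V^*,W^*)$ as $k\in K\to\infty$. Since $\|V^k-W^k\|_2\to 0$ along the full sequence (hence along $K$), continuity of the norm yields $\|V^*-W^*\|_2=\lim_{k\in K\to\infty}\|V^k-W^k\|_2=0$, so $V^*=W^*$. Because the constraint of the variable-split reformulation underlying \eqref{eq:SGL0_prob} is precisely $V=W$, the pair $(V^*,W^*)$ — equivalently $W^*$ paired with its auxiliary copy $V^*=W^*$ — satisfies that constraint, and every term of the objective of \eqref{eq:SGL0_prob} is finite at $W^*$; thus $W^*$ is a feasible solution.

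There is little genuine difficulty here; the only points that need care are justifying $\beta_k\to\infty$ from the update rule (which tacitly assumes the outer loop runs indefinitely, so that an infinite sequence exists) and making explicit that ``feasible'' is meant relative to the constraint $V=W$ of the reformulation rather than relative to the unconstrained problem \eqref{eq:SGL0_prob} itself. One should also remark that the uniform boundedness hypothesis is exactly what prevents the penalty weight from being annihilated by a blow-up of $\|V^k-W^k\|_2$, which is why it cannot be dropped.
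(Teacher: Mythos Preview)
Your proposal is correct and follows essentially the same approach as the paper: bound the penalty term by the uniform bound $M$, divide by $\beta_k\to\infty$, and pass to the limit along a convergent subsequence. If anything, you are more explicit than the paper in justifying the ``algebraic manipulation'' (nonnegativity of the non-penalty terms) and the divergence of $\beta_k$ from the update rule $\beta\leftarrow\sigma\beta$.
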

\begin{proof}
Because $(V^*, W^*)$ is an accumulation point, there exists a subsequence $K$ such that $\lim_{k \in K \rightarrow \infty} (V^k, W^k) =(V^*, W^*)$. If $\{F_{\beta_k}(V^k, W^k)\}_{k=1}^{\infty}$ is uniformly bounded, there exists $M$ such that $F_{\beta_k}(V^k, W^k) \leq M$ for all $k \in \mathbb{N}$. After some algebraic manipulation, we should obtain
\begin{align}
\sum_{l = 1}^L \|V_l^k - W_l^k\|_2^2 \leq \frac{2}{\beta_k} M,
\end{align}
where $M$ is some positive constant equals to the total number of weight parameters in $W$. Taking the limit over $k \in K $, we have
\begin{align*}
    \sum_{l = 1}^L \|V_l^* - W_l^*\|_2^2 = 0,
\end{align*}
which follows that $V^* = W^*$. As a result, $W^*$ is a feasible solution to \eqref{eq:SGL0_prob}.
\end{proof}
\section{Experiments}
We compare the proposed sparse group l$_0$asso regularization against four other methods as baselines: group lasso, sparse group lasso, combined group and exclusive sparsity (CGES) proposed in \cite{yoon2017combined}, and the group variant of $\ell_0$ regularization proposed in \cite{Louizos2017LearningSN}. For the group terms, the weights are grouped together based on the filters or output channels, which we will refer to as neurons. We apply these methods on the following image datasets: MNIST \cite{lecun1998gradient} using the LeNet-5-Caffe \cite{jia2014caffe} and CIFAR 10/100 \cite{krizhevsky2009learning} using wide residual networks \cite{zagoruyko2016wide}. Because the optimization algorithms do not drive most, if not all, the weights and neurons to zeroes, we have to set them to zeroes when their values are below a certain threshold. In our experiments, if the absolute weights are below $10^{-5}$, we set them to zeroes. Then, {\bf weight sparsity} is defined to be {\it the percentage of zero weights with respect to the total number of weights trained in the network}. If the normalized sum of the absolute values of the weights of the neuron is less than $10^{-5}$, then the weights of the neuron are set to zeroes. {\bf Neuron sparsity} is defined to be {\it the percentage of neurons whose weights are zeroes with respect to the total number of neurons in the network}. 
\subsection{MNIST Classification}
The MNIST dataset consists of 60k training images and 10k test images. It is trained on Lenet-5-Caffe, which has four layers with 1,370 total neurons and 431,080 total weight parameters. All layers of the network are applied with strictly the same type of regularization. No other regularization methods (e.g., dropout and  batch normalization) are used. The network is optimized using Adam \cite{kingma2014adam} with initial learning rate 0.001. For every 40 epochs, the learning rate decays by a factor of 0.1.  We set the regularization parameter $\lambda = 0.1/60000$. For sparse group l$_0$asso, we set $\beta = 2.5/60000$, and for every 40 epochs, it increases by a factor of 1.25. The network is trained for 200 epochs across 5 runs.  

\begin{table*}[h!]
    \centering
    \resizebox{\textwidth}{!}{
    \begin{tabular}{l|c|c|c}
         Method & Mean Weight Sparsity (\%) [Std (\%)] & Mean Neuron Sparsity (\%)[Std (\%)] & Test Error (\%) [Std (\%)]  \\
         \hline
         $\ell_0$ \cite{Louizos2017LearningSN} & 0.02 [$<0.01$] & 0 [0] & 0.69 [0.02]\\
         CGES & 94.12 [0.26] & 39.33 [1.61]  & 0.65 [0.04] \\
         group lasso & 88.38 [0.49]& 69.39 [0.64]  & 0.76 [0.02] \\
         sparse group lasso & 93.50 [0.13] & 73.52 [0.49] & 0.77 [0.03] \\
         sparse group $\ell_0$asso (proposed) & 89.27 [0.46] & 68.25 [0.49] & 0.67 [0.02]
    \end{tabular}}
    \caption{Comparison of the baseline methods and sparse group $l_0$asso regularization method on LeNet-5-Caffe trained on MNIST. Mean weight sparsity, mean neuron sparsity, and mean test error across 5 runs after 200 epochs are shown. $N= 60000$, the number of training points.}
    \label{tab:MNIST}
\end{table*}

Table \ref{tab:MNIST} reports the mean results for weight sparsity, neuron sparsity, and test error obtained at the end of the runs. The $\ell_0$ regularization method barely sparsifies the network. On the other hand, CGES obtains the lowest mean test error with the largest mean weight sparsity, but its mean neuron sparsity is not as high as group lasso, sparse group lasso, and sparse group l$_0$asso. The largest mean neuron sparsity is attained by sparse group lasso, but its corresponding test error is worse than the other methods. Sparse group l$_0$asso attains comparable mean weight and neuron sparsity as group lasso and sparse group lasso but with lower test error. Therefore, the proposed regularization is able to balance accuracy with both weight and neuron sparsity better than the baseline methods. 

\subsection{CIFAR Classification}
CIFAR 10/100 is a dataset that has 10/100 classes split into 50k training images and 10k test images. The dataset is trained on wide residual networks, specifically WRN-28-10. WRN-28-10 has approximately 36,500,000 weight parameters and 10,736 neurons. The network is optimized using stochastic gradient descent with initial learning rate 0.1. After every 60 epochs, learning rate decays by a factor of 0.2. Strictly the same type of regularization is applied to the weights of the hidden layer where dropout is utilized in the residual block. We vary the regularization parameter $\lambda = \alpha/50000$ by training the model on $\alpha \in \{0.01, 0.05, 0.1, 0.2, 0.5\}$. For sparse group l$_0$asso, we set $\beta = 25 \alpha/50000$ initially and it increases by a factor of 1.25 for every 20 epochs. The network is trained for 200 epochs across 5 runs. Note that we exclude $\ell_0$ regularization by Louizos \etal \cite{Louizos2017LearningSN} because we found the method to be unstable when $\alpha \geq 0.1$. The results are shown in Figures \ref{fig:cifar10} and \ref{fig:cifar100} for CIFAR 10 and CIFAR 100, respectively.

According to Figure \ref{fig:cifar10}, CGES outperforms the other methods when $\alpha = 0.01$ for both sparsity and test error. However, sparsity levels  stabilize after when $\alpha = 0.1$. Sparse group lasso attains the highest mean weight and neuron sparsity when $\alpha \geq 0.01$. Group lasso and sparse group l$_0$asso have comparable mean weight and neuron sparsity levels, but sparse group l$_0$asso outperforms the other methods in terms of test error when $\alpha \geq 0.05$. 

Figure \ref{fig:cifar100} shows that the results for CIFAR 100 are similar to the results for CIFAR 10. CGES has better weight sparsity when $\alpha \leq 0.1$, but it has the least neuron sparsity when $\alpha \geq 0.2$. Sparsity levels for CGES appear to stabilize after when $\alpha = 0.1$. Sparse group lasso attains the highest mean weight and neuron sparsity for $\alpha \geq 0.2$. The proposed method sparse group l$_0$asso has comparable weight and neuron sparsity as group lasso, but it has the lowest mean test error when $\alpha \in \{0.05, 0.1, 0.2\}$. 

Overall, the results demonstrate that sparse group l$_0$asso maintains superior test accuracy with similar sparsity levels as group lasso when trading accuracy for sparsity as $\alpha$ increases.
\begin{figure*}[h!!!]%
\centering
\begin{subfigure}{0.7\columnwidth}
\includegraphics[width=\columnwidth]{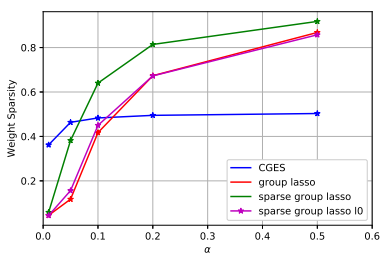}%
\caption{Mean weight sparsity}%
\label{subfig:weight_sparsity_c10}%
\end{subfigure}\hfill%
\begin{subfigure}{0.7\columnwidth}
\includegraphics[width=\columnwidth]{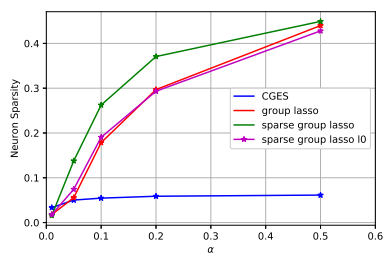}%
\caption{Mean neuron sparsity}%
\label{subfig:neuron_sparsity_c10}%
\end{subfigure}\hfill%
\begin{subfigure}{0.7\columnwidth}
\includegraphics[width=\columnwidth]{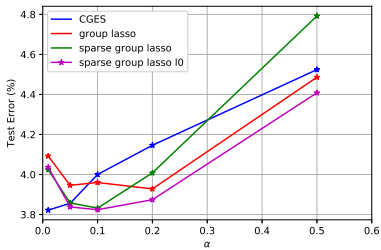}%
\caption{Mean test error}%
\label{subfig:test_err_c10}%
\end{subfigure}%
\caption{Mean results for CIFAR-10 on WRN-28-10 across 5 runs when varying the regularization parameter$\lambda = \alpha/50000$ when $\alpha \in \{0.01, 0.02, 0.1, 0.2, 0.5\}$.}
\label{fig:cifar10}
\end{figure*}
\begin{figure*}[t!!!]%
\centering
\begin{subfigure}{0.7\columnwidth}
\includegraphics[width=\columnwidth]{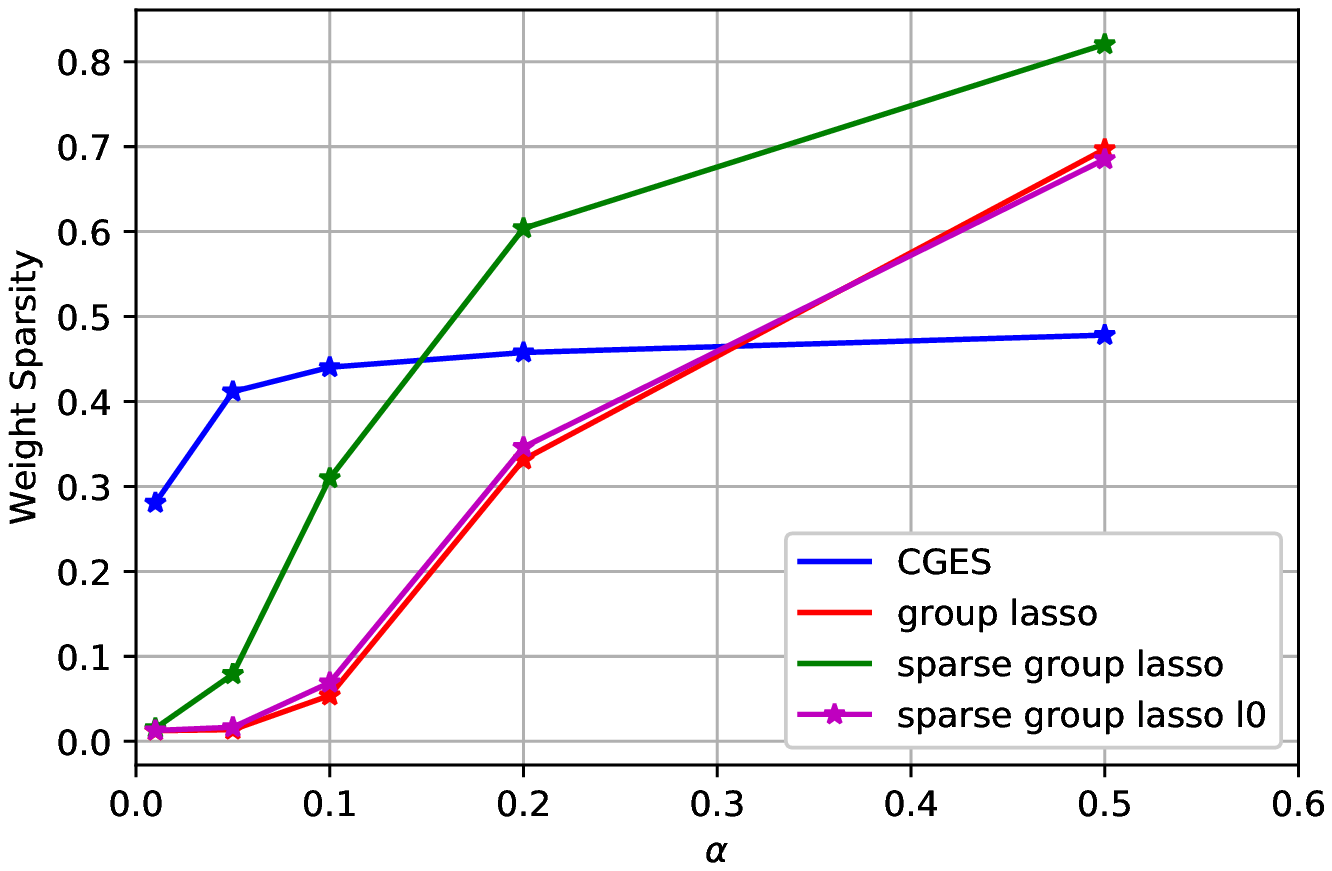}%
\caption{Mean weight sparsity}%
\label{subfig:weight_sparsity_c100}%
\end{subfigure}\hfill%
\begin{subfigure}{0.7\columnwidth}
\includegraphics[width=\columnwidth]{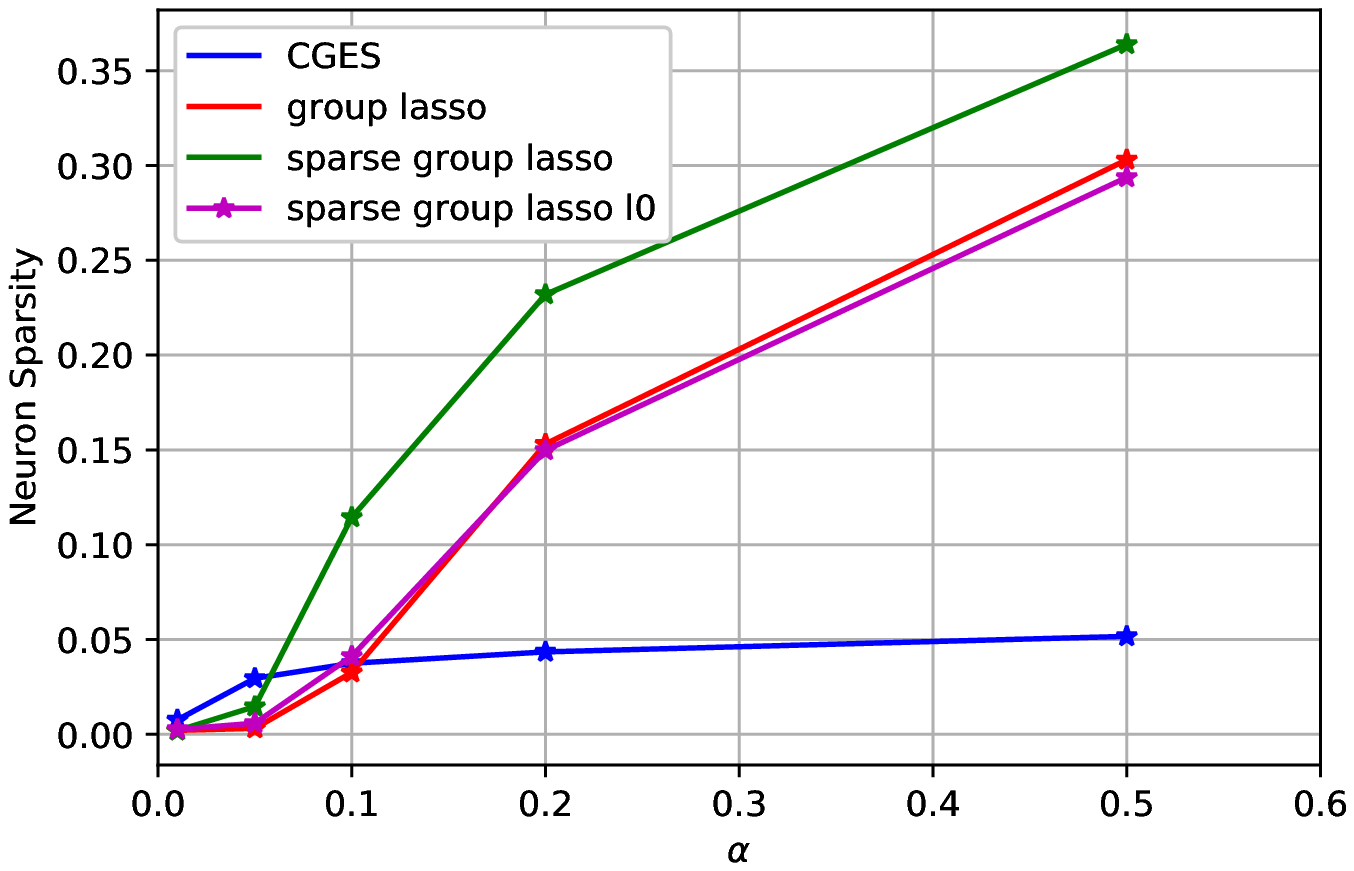}%
\caption{Mean neuron sparsity}%
\label{subfig:neuron_sparsity_c100}%
\end{subfigure}\hfill%
\begin{subfigure}{0.7\columnwidth}
\includegraphics[width=\columnwidth]{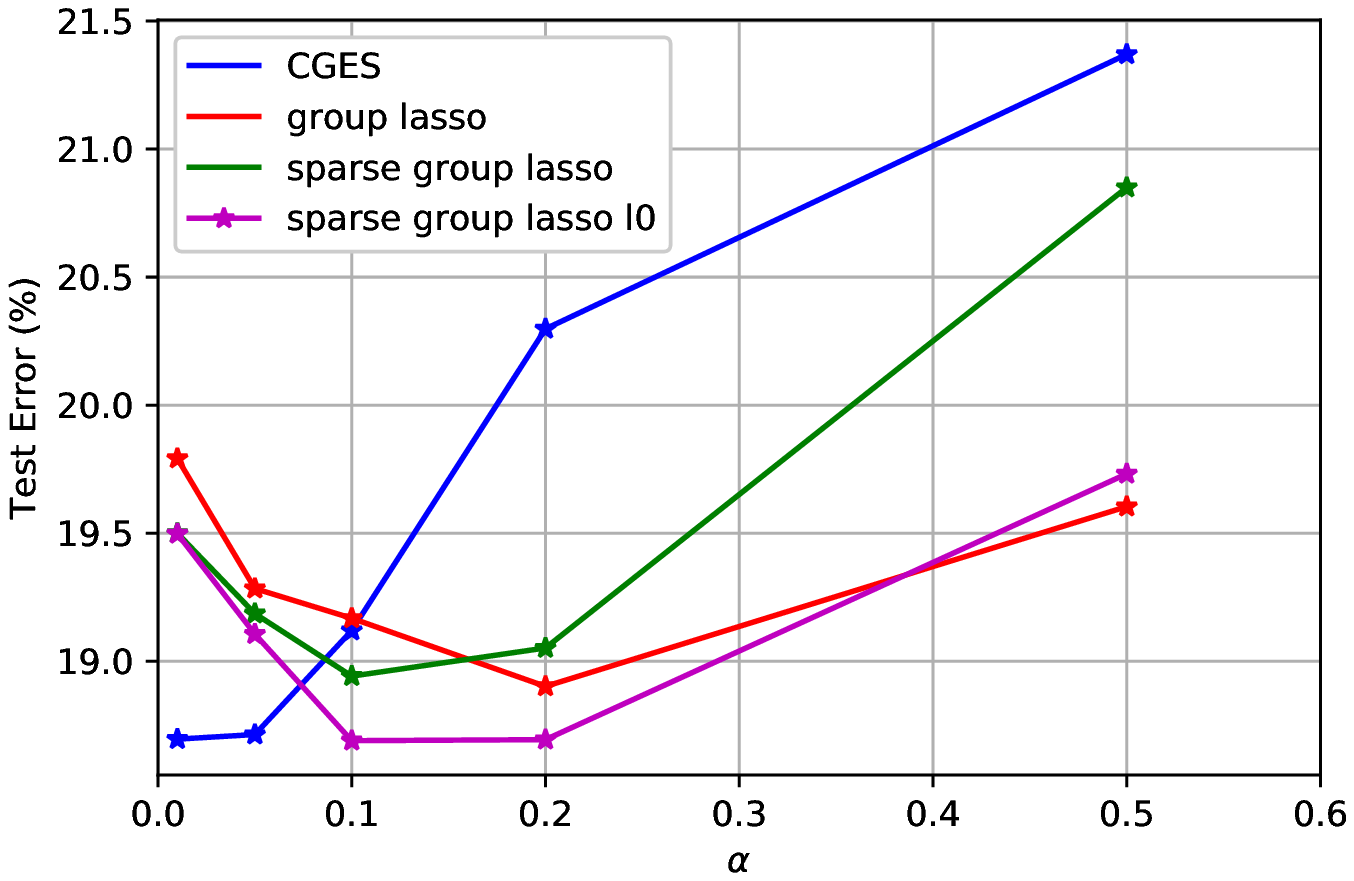}%
\caption{Mean test error)}%
\label{subfig:test_err_c100}%
\end{subfigure}%
\caption{Mean results for CIFAR-100 on WRN-28-10 across 5 runs when varying the regularization parameter $\lambda = \alpha/50000$ when $\alpha \in \{0.01, 0.02, 0.1, 0.2, 0.5\}$.}
\label{fig:cifar100}
\end{figure*}
\section{Conclusion and Future Work}
In this work, we propose sparse group l$_0$asso, a new variant of sparse group lasso where the $\ell_1$ norm on the weight parameters is replaced with the $\ell_0$ norm. We develop a new algorithm to optimize loss functions regularized with sparse group l$_0$asso for DNNs in order to attain a sparse network with competitive accuracy. We compare our method with various baseline methods on MNIST and CIFAR 10/100 on different CNNs. The experimental results demonstrate that in general, sparse group l$_0$asso attains similar weight and neuron sparsity as group lasso while maintaining competitive accuracy.  

For our future work, we plan to extend our proposed variant to other nonconvex penalties, such as $\ell_1 - \ell_2$, transformed $\ell_1$, and $\ell_{1/2}$. We will examine these nonconvex sparse group lasso methods on various experiments, not only on MNIST and CIFAR 10/100 but also on Tiny Imagenet and Street View House Number trained on different networks such as MobileNetv2 \cite{sandler2018mobilenetv2}. In addition, we might investigate in developing an alternating direction method of multipliers algorithm \cite{boyd2011distributed} as an alternative to the algorithm developed in this paper.
\section{Acknowledgement}
The work was partially supported by NSF grant IIS-1632935, DMS- 1854434, a Qualcomm Faculty Award, and Qualcomm AI Research.
\newpage
{\small
\bibliographystyle{ieee}
\bibliography{egbib}

\begin{thebibliography}{10}\itemsep=-1pt

\bibitem{aghasi2017net}
A.~Aghasi, A.~Abdi, N.~Nguyen, and J.~Romberg.
\newblock Net-trim: Convex pruning of deep neural networks with performance
  guarantee.
\newblock In {\em Advances in Neural Information Processing Systems}, pages
  3177--3186, 2017.

\bibitem{alvarez2016learning}
J.~M. Alvarez and M.~Salzmann.
\newblock Learning the number of neurons in deep networks.
\newblock In {\em Advances in Neural Information Processing Systems}, pages
  2270--2278, 2016.

\bibitem{ba2014deep}
J.~Ba and R.~Caruana.
\newblock Do deep nets really need to be deep?
\newblock In {\em Advances in neural information processing systems}, pages
  2654--2662, 2014.

\bibitem{bao2016image}
C.~Bao, B.~Dong, L.~Hou, Z.~Shen, X.~Zhang, and X.~Zhang.
\newblock Image restoration by minimizing zero norm of wavelet frame
  coefficients.
\newblock {\em Inverse problems}, 32(11):115004, 2016.

\bibitem{boyd2011distributed}
S.~Boyd, N.~Parikh, E.~Chu, B.~Peleato, J.~Eckstein, et~al.
\newblock Distributed optimization and statistical learning via the alternating
  direction method of multipliers.
\newblock {\em Foundations and Trends{\textregistered} in Machine learning},
  3(1):1--122, 2011.

\bibitem{chan2003wavelet}
R.~H. Chan, T.~F. Chan, L.~Shen, and Z.~Shen.
\newblock Wavelet algorithms for high-resolution image reconstruction.
\newblock {\em SIAM Journal on Scientific Computing}, 24(4):1408--1432, 2003.

\bibitem{chen2017deeplab}
L.-C. Chen, G.~Papandreou, I.~Kokkinos, K.~Murphy, and A.~L. Yuille.
\newblock Deeplab: Semantic image segmentation with deep convolutional nets,
  atrous convolution, and fully connected crfs.
\newblock {\em IEEE transactions on pattern analysis and machine intelligence},
  40(4):834--848, 2017.

\bibitem{denton2014exploiting}
E.~L. Denton, W.~Zaremba, J.~Bruna, Y.~LeCun, and R.~Fergus.
\newblock Exploiting linear structure within convolutional networks for
  efficient evaluation.
\newblock In {\em Advances in neural information processing systems}, pages
  1269--1277, 2014.

\bibitem{dinh2018convergence}
T.~Dinh and J.~Xin.
\newblock Convergence of a relaxed variable splitting method for learning
  sparse neural networks via $\ell_1$,$\ell_0 $, and transformed-$\ell_1 $
  penalties.
\newblock {\em arXiv preprint arXiv:1812.05719}, 2018.

\bibitem{dong2013efficient}
B.~Dong and Y.~Zhang.
\newblock An efficient algorithm for l0 minimization in wavelet frame based
  image restoration.
\newblock {\em Journal of Scientific Computing}, 54(2-3):350--368, 2013.

\bibitem{fan2001variable}
J.~Fan and R.~Li.
\newblock Variable selection via nonconcave penalized likelihood and its oracle
  properties.
\newblock {\em Journal of the American statistical Association},
  96(456):1348--1360, 2001.

\bibitem{han2015deep}
S.~Han, H.~Mao, and W.~J. Dally.
\newblock Deep compression: Compressing deep neural networks with pruning,
  trained quantization and huffman coding.
\newblock {\em arXiv preprint arXiv:1510.00149}, 2015.

\bibitem{han2015learning}
S.~Han, J.~Pool, J.~Tran, and W.~Dally.
\newblock Learning both weights and connections for efficient neural network.
\newblock In {\em Advances in neural information processing systems}, pages
  1135--1143, 2015.

\bibitem{he2016deep}
K.~He, X.~Zhang, S.~Ren, and J.~Sun.
\newblock Deep residual learning for image recognition.
\newblock In {\em Proceedings of the IEEE conference on computer vision and
  pattern recognition}, pages 770--778, 2016.

\bibitem{hu2016network}
H.~Hu, R.~Peng, Y.-W. Tai, and C.-K. Tang.
\newblock Network trimming: A data-driven neuron pruning approach towards
  efficient deep architectures.
\newblock {\em arXiv preprint arXiv:1607.03250}, 2016.

\bibitem{huang2017speed}
J.~Huang, V.~Rathod, C.~Sun, M.~Zhu, A.~Korattikara, A.~Fathi, I.~Fischer,
  Z.~Wojna, Y.~Song, S.~Guadarrama, et~al.
\newblock Speed/accuracy trade-offs for modern convolutional object detectors.
\newblock In {\em Proceedings of the IEEE conference on computer vision and
  pattern recognition}, pages 7310--7311, 2017.

\bibitem{jia2014caffe}
Y.~Jia, E.~Shelhamer, J.~Donahue, S.~Karayev, J.~Long, R.~Girshick,
  S.~Guadarrama, and T.~Darrell.
\newblock Caffe: Convolutional architecture for fast feature embedding.
\newblock In {\em Proceedings of the 22nd ACM international conference on
  Multimedia}, pages 675--678. ACM, 2014.

\bibitem{jin2016training}
X.~Jin, X.~Yuan, J.~Feng, and S.~Yan.
\newblock Training skinny deep neural networks with iterative hard thresholding
  methods.
\newblock {\em arXiv preprint arXiv:1607.05423}, 2016.

\bibitem{kingma2014adam}
D.~P. Kingma and J.~Ba.
\newblock Adam: A method for stochastic optimization.
\newblock {\em arXiv preprint arXiv:1412.6980}, 2014.

\bibitem{krizhevsky2009learning}
A.~Krizhevsky and G.~Hinton.
\newblock Learning multiple layers of features from tiny images.
\newblock Technical report, Citeseer, 2009.

\bibitem{krizhevsky2012imagenet}
A.~Krizhevsky, I.~Sutskever, and G.~E. Hinton.
\newblock Imagenet classification with deep convolutional neural networks.
\newblock In {\em Advances in neural information processing systems}, pages
  1097--1105, 2012.

\bibitem{krogh1992simple}
A.~Krogh and J.~A. Hertz.
\newblock A simple weight decay can improve generalization.
\newblock In {\em Advances in neural information processing systems}, pages
  950--957, 1992.

\bibitem{lecun1998gradient}
Y.~LeCun, L.~Bottou, Y.~Bengio, P.~Haffner, et~al.
\newblock Gradient-based learning applied to document recognition.
\newblock {\em Proceedings of the IEEE}, 86(11):2278--2324, 1998.

\bibitem{li2016pruning}
H.~Li, A.~Kadav, I.~Durdanovic, H.~Samet, and H.~P. Graf.
\newblock Pruning filters for efficient convnets.
\newblock {\em arXiv preprint arXiv:1608.08710}, 2016.

\bibitem{long2015fully}
J.~Long, E.~Shelhamer, and T.~Darrell.
\newblock Fully convolutional networks for semantic segmentation.
\newblock In {\em Proceedings of the IEEE conference on computer vision and
  pattern recognition}, pages 3431--3440, 2015.

\bibitem{Louizos2017LearningSN}
C.~Louizos, M.~Welling, and D.~P. Kingma.
\newblock Learning sparse neural networks through $l_0$ regularization.
\newblock {\em CoRR}, abs/1712.01312, 2017.

\bibitem{lu2013sparse}
Z.~Lu and Y.~Zhang.
\newblock Sparse approximation via penalty decomposition methods.
\newblock {\em SIAM Journal on Optimization}, 23(4):2448--2478, 2013.

\bibitem{ma2019transformed}
R.~Ma, J.~Miao, L.~Niu, and P.~Zhang.
\newblock Transformed $\ell_1 $ regularization for learning sparse deep neural
  networks.
\newblock {\em arXiv preprint arXiv:1901.01021}, 2019.

\bibitem{molchanov2017variational}
D.~Molchanov, A.~Ashukha, and D.~Vetrov.
\newblock Variational dropout sparsifies deep neural networks.
\newblock In {\em Proceedings of the 34th International Conference on Machine
  Learning-Volume 70}, pages 2498--2507. JMLR. org, 2017.

\bibitem{nocedal2006numerical}
J.~Nocedal and S.~Wright.
\newblock {\em Numerical optimization}.
\newblock Springer Science \& Business Media, 2006.

\bibitem{parkhi2015deep}
O.~M. Parkhi, A.~Vedaldi, A.~Zisserman, et~al.
\newblock Deep face recognition.
\newblock In {\em bmvc}, page~6, 2015.

\bibitem{ren2015faster}
S.~Ren, K.~He, R.~Girshick, and J.~Sun.
\newblock Faster r-cnn: Towards real-time object detection with region proposal
  networks.
\newblock In {\em Advances in neural information processing systems}, pages
  91--99, 2015.

\bibitem{sandler2018mobilenetv2}
M.~Sandler, A.~Howard, M.~Zhu, A.~Zhmoginov, and L.-C. Chen.
\newblock Mobilenetv2: Inverted residuals and linear bottlenecks.
\newblock In {\em Proceedings of the IEEE Conference on Computer Vision and
  Pattern Recognition}, pages 4510--4520, 2018.

\bibitem{scardapane2017group}
S.~Scardapane, D.~Comminiello, A.~Hussain, and A.~Uncini.
\newblock Group sparse regularization for deep neural networks.
\newblock {\em Neurocomputing}, 241:81--89, 2017.

\bibitem{simon2013sparse}
N.~Simon, J.~Friedman, T.~Hastie, and R.~Tibshirani.
\newblock A sparse-group lasso.
\newblock {\em Journal of Computational and Graphical Statistics},
  22(2):231--245, 2013.

\bibitem{Simonyan2015VeryDC}
K.~Simonyan and A.~Zisserman.
\newblock Very deep convolutional networks for large-scale image recognition.
\newblock {\em CoRR}, abs/1409.1556, 2015.

\bibitem{tibshirani1996regression}
R.~Tibshirani.
\newblock Regression shrinkage and selection via the lasso.
\newblock {\em Journal of the Royal Statistical Society: Series B
  (Methodological)}, 58(1):267--288, 1996.

\bibitem{trzasko2007sparse}
J.~Trzasko, A.~Manduca, and E.~Borisch.
\newblock Sparse mri reconstruction via multiscale l0-continuation.
\newblock In {\em 2007 IEEE/SP 14th Workshop on Statistical Signal Processing},
  pages 176--180. IEEE, 2007.

\bibitem{ullrich2017soft}
K.~Ullrich, E.~Meeds, and M.~Welling.
\newblock Soft weight-sharing for neural network compression.
\newblock {\em stat}, 1050:9, 2017.

\bibitem{wen2016learning}
W.~Wen, C.~Wu, Y.~Wang, Y.~Chen, and H.~Li.
\newblock Learning structured sparsity in deep neural networks.
\newblock In {\em Advances in neural information processing systems}, pages
  2074--2082, 2016.

\bibitem{yoon2017combined}
J.~Yoon and S.~J. Hwang.
\newblock Combined group and exclusive sparsity for deep neural networks.
\newblock In {\em Proceedings of the 34th International Conference on Machine
  Learning-Volume 70}, pages 3958--3966. JMLR. org, 2017.

\bibitem{yuan2006model}
M.~Yuan and Y.~Lin.
\newblock Model selection and estimation in regression with grouped variables.
\newblock {\em Journal of the Royal Statistical Society: Series B (Statistical
  Methodology)}, 68(1):49--67, 2006.

\bibitem{yuan2017gradient}
X.-T. Yuan, P.~Li, and T.~Zhang.
\newblock Gradient hard thresholding pursuit.
\newblock {\em Journal of Machine Learning Research}, 18:166--1, 2017.

\bibitem{zagoruyko2016wide}
S.~Zagoruyko and N.~Komodakis.
\newblock Wide residual networks.
\newblock {\em arXiv preprint arXiv:1605.07146}, 2016.

\bibitem{zhang2016understanding}
C.~Zhang, S.~Bengio, M.~Hardt, B.~Recht, and O.~Vinyals.
\newblock Understanding deep learning requires rethinking generalization.
\newblock {\em arXiv preprint arXiv:1611.03530}, 2016.

\bibitem{zhang2013}
Y.~Zhang, B.~Dong, and Z.~Lu.
\newblock l0 minimization for wavelet frame based image restoration.
\newblock {\em Mathematics of Computation}, 82(282):995--1015, 2013.

\bibitem{zhou2010exclusive}
Y.~Zhou, R.~Jin, and S.~C.-H. Hoi.
\newblock Exclusive lasso for multi-task feature selection.
\newblock In {\em Proceedings of the Thirteenth International Conference on
  Artificial Intelligence and Statistics}, pages 988--995, 2010.

\bibitem{zhuang2018discrimination}
Z.~Zhuang, M.~Tan, B.~Zhuang, J.~Liu, Y.~Guo, Q.~Wu, J.~Huang, and J.~Zhu.
\newblock Discrimination-aware channel pruning for deep neural networks.
\newblock In {\em Advances in Neural Information Processing Systems}, pages
  875--886, 2018.

\end{thebibliography}
}

\end{document}